\documentclass[letterpaper, 10 pt, conference]{ieeeconf}  

\IEEEoverridecommandlockouts                              
\usepackage{cite}
\usepackage{graphicx}
\usepackage{textcomp}
\usepackage{xcolor}
\usepackage{amsmath, amsfonts, amssymb}

\usepackage{amsthm}
\usepackage{hyperref}
\usepackage{algorithm}
\usepackage[noend]{algpseudocode}
\usepackage{placeins}
\usepackage{subcaption}

\newcommand{\ind}{\mathbb{I}}

\newtheorem{theorem}{Theorem}
\newtheorem{assumption}{Assumption}

\theoremstyle{definition}

\newcommand{\R}{\mathbb{R}}
\newcommand{\E}{\mathbb{E}}
\newcommand{\co}{\mathrm{co}} 

\def\BibTeX{{\rm B\kern-.05em{\sc i\kern-.025em b}\kern-.08em
    T\kern-.1667em\lower.7ex\hbox{E}\kern-.125emX}}
\title{\LARGE \bf Full-Gradient Successor Feature Representations}

\author{Ritish Shrirao\\
Dept. of CSE, IIIT Bangalore\\
\href{mailto:ritish.shrirao@iiitb.ac.in}{ritish.shrirao@iiitb.ac.in}
\and
Aditya Priyadarshi\\
Dept. of CSE, IIIT Bangalore\\
\href{mailto:aditya.priyadarshi@iiitb.ac.in}{aditya.priyadarshi@iiitb.ac.in}
\and
Raghuram Bharadwaj\\
Dept. of DSAI, IIIT Bangalore\\
\href{mailto:raghuram.bharadwaj@iiitb.ac.in}{raghuram.bharadwaj@iiitb.ac.in}
\thanks{This work has been submitted to the IEEE for possible publication. Copyright may be transferred without notice, after which this version may no longer be accessible.}
}

\begin{document}

\maketitle
\thispagestyle{empty}
\pagestyle{empty}


\begin{abstract}
Successor Features (SF) combined with Generalized Policy Improvement (GPI) provide a robust framework for transfer learning in Reinforcement Learning (RL) by decoupling environment dynamics from reward functions. However, standard SF learning methods typically rely on semi-gradient Temporal Difference (TD) updates. When combined with non-linear function approximation, semi-gradient methods lack robust convergence guarantees and can lead to instability, particularly in the multi-task setting where accurate feature estimation is critical for effective GPI. Inspired by Full Gradient DQN, we propose Full-Gradient Successor Feature Representations Q-Learning (FG-SFRQL), an algorithm that optimizes the successor features by minimizing the full Mean Squared Bellman Error. Unlike standard approaches, our method computes gradients with respect to parameters in both the online and target networks. We provide a theoretical proof of almost-sure convergence for FG-SFRQL and demonstrate empirically that minimizing the full residual leads to superior sample efficiency and transfer performance compared to semi-gradient baselines in both discrete and continuous domains.
\footnote{Source code and implementation details are publicly available at \href{https://github.com/RitishShrirao/Full-Gradient-Successor-Feature-Representations}{the project repository}.}
\end{abstract}

\section{Introduction}
Reinforcement Learning \cite{sutton1998reinforcement} has achieved remarkable success in enabling agents to learn complex behaviors through interaction with their environment, particularly with the advent of Deep Reinforcement Learning (DRL), where neural networks are used to approximate value functions and policies. Landmark achievements, such as those demonstrated by DQN \cite{mnih2013playing} highlight the ability of DRL agents to master high-dimensional control tasks directly from raw sensory inputs. Despite these advances, a fundamental limitation remains: most RL and DRL agents are trained to optimize a single, fixed reward function, resulting in policies that are highly specialized and difficult to generalize. When the task changes even slightly, agents often require retraining from scratch, leading to significant inefficiencies in data and computation. This limitation stands in contrast to general intelligence, where knowledge acquired in one context can be systematically reused across diverse tasks.

To address this limitation, researchers have explored representations that disentangle environment dynamics from task-specific rewards, enabling more efficient transfer. A prominent approach in this direction is the Successor Representation (SR)  \cite{dayan1993improving} which encodes the expected future state occupancy under a given policy. Building on this idea, Successor Features (SF) \cite{barreto2017successor} extend SR to feature-based representations, decomposing the value function into two components: a predictive model of future feature activations (capturing dynamics) and a task-dependent weight vector (capturing rewards). In doing so, SF provides a natural bridge between model-free and model-based RL \cite{lehnert2019successor, lehnert2020successor}, allowing agents to predict future reward outcomes similar to model-based planners while retaining the efficiency of model-free learning.


The utility of Successor Features extends well beyond simple value function transfer. Recent works have leveraged the SF framework to enable unsupervised pre-training \cite{liu2021aps, kim2024decoupling}, efficient exploration \cite{janz2019successor, myers2024learning}, and safety-constrained planning \cite{feng2022safety, gimelfarb2021risk}. Furthermore, the decomposition of dynamics and rewards also has applications in Inverse Reinforcement Learning (IRL), allowing agents to infer rewards from demonstrations without adversarial training \cite{filos2021psiphi, jain2024non, azad2025sr}. 

Despite this widespread adoption, current Deep SF methods largely rely on standard Q-learning mechanisms \cite{mnih2013playing} to learn the successor features. From an optimization standpoint, these are semi-gradient methods: they treat the target value as a fixed constant during the update step, ignoring the dependence of the target on the trainable parameters. While computationally convenient, semi-gradient methods coupled with non-linear function approximation (such as neural networks) are known to diverge or oscillate \cite{baird1995residual, tsitsiklis1996analysis}.
This issue is especially critical for transfer, where Successor Features are typically paired with Generalized Policy Improvement (GPI) \cite{bertsekas1995neuro} to reuse knowledge across tasks. GPI evaluates previously learned policies under new reward functions, assuming that the learned SFs accurately capture expected future feature occupancy. However, if the optimization procedure fails to converge, these estimates become unreliable, undermining GPI’s guarantees and leading to degraded transfer performance.

To address these theoretical and practical deficiencies, we introduce \textbf{Full-Gradient Successor Feature Representations Q-Learning (FG-SFRQL)}. Drawing inspiration from Full-Gradient TD \cite{loth2006unified} and Full-Gradient DQN \cite{avrachenkov2021full}, we formulate SF learning as the direct minimization of the full Mean Squared Bellman Error. Our update rule computes the gradient with respect to the parameters in both the prediction and the target, transforming the learning process into a true stochastic gradient descent procedure on a well-defined loss landscape.

Our contributions are:
\begin{enumerate}
    \item We propose FG-SFRQL, a novel algorithm for learning successor features that eliminates the semi-gradient bias inherent in standard SFRQL.
    \item Inspired from Full gradient DQN, we provide a theoretical analysis proving the almost-sure convergence of our method to a stationary point of the aggregate Bellman error in a multi-task setting.
    \item We empirically show that FG-SFRQL outperforms semi-gradient baselines in sequential transfer, achieving faster adaptation and higher returns.
\end{enumerate}

\section{Related works}
\subsection{\textbf{Successor Features and Transfer Learning}}
The Successor Representation (SR), introduced by Dayan \cite{dayan1993improving}, decomposes the value function into a predictive occupancy map and a local reward vector. Barreto et al. \cite{barreto2017successor} generalized this to Successor Features (SF) and introduced Generalized Policy Improvement (GPI), establishing a framework for zero-shot transfer across tasks with shared dynamics. While standard SFs assume linear rewards, Reinke et al. \cite{reinke2021successor} introduced Successor Feature Representations (SFR) to handle arbitrary non-linear rewards, a formulation we adopt in this work.

\subsection{\textbf{Gradient-Based Optimization and Stability in RL}}
The divergence of semi-gradient Temporal Difference (TD) learning when combined with non-linear function approximation and off-policy sampling is a well-known phenomenon, often referred to as the "Deadly Triad" \cite{sutton1998reinforcement, tsitsiklis1996analysis}, and has been extensively studied in the approximate dynamic programming literature \cite{bertsekas1995neuro}. 
To address this, Baird \cite{baird1995residual} proposed Residual algorithms, which perform true stochastic gradient descent on the Mean Squared Bellman Error. While stable, residual methods were criticized for their double sampling requirement and slow convergence.

Loth and Preux \cite{loth2006unified} unified these views with Full-Gradient TD, proposing a unified view of TD algorithms, showing that methods ranging from TD($\lambda$) to Residual Gradients can be understood as minimizing specific gradient functions. Recently, Avrachenkov et al. \cite{avrachenkov2021full} identified theoretical instabilities in the standard DQN update via O.D.E. analysis and proposed Full-Gradient DQN (FG-DQN) as a modified, theoretically sound stochastic approximation scheme. Our work extends the FG-DQN formalism to the multi-task Successor Feature setting, deriving update rules that account for the vector-valued nature of SFs and the max-over-policies operator inherent in GPI.

\subsection{\textbf{Convergence Analysis of Successor Features}}
Despite the empirical success of SFs, theoretical analysis of their convergence under function approximation remains limited.
Recently, Zhang et al. \cite{zhang2024sf} provided a convergence analysis for SF-DQN. However, their analysis is restricted to the semi-gradient regime and relies on strong assumptions regarding initialization and exploration to guarantee convergence.

In contrast, our work frames SF learning as a stochastic recursive inclusion aimed at minimizing the full Bellman residual. By utilizing the full gradient, we avoid the moving-target problem entirely. This allows us to provide almost-sure convergence guarantees to a stationary point of the aggregate Bellman error without the restrictive assumptions required by prior work.

\section{Background}
\label{sec:background}

\subsection{\textbf{Successor Feature Representations (SFR)}}
While standard Successor Features assume a linear relationship between features and rewards, Successor Feature Representations (SFR) \cite{reinke2021successor} generalize this to handle arbitrary, potentially non-linear reward functions. SFR decouples the environment dynamics from the rewards by learning the cumulative discounted probability of observing specific features. This is encapsulated in the $\xi$-function.

For a policy $\pi$, the $\xi$-function represents the future cumulative discounted probability of successor features $\phi \in \Phi$:
\begin{equation}
    \xi^\pi(s, a, \phi) = \sum_{k=0}^\infty \gamma^k P(\phi_{t+k} = \phi \mid s_t=s, a_t=a, \pi)
\end{equation}
where $\gamma \in [0, 1)$ is the discount factor. Crucially, if the reward function $R(\phi)$ depends only on the features, the action-value function $Q^\pi(s,a)$ can be exactly reconstructed by aggregating the $\xi$-values:
\begin{equation}
    Q^\pi(s, a) = \sum_{\phi \in \Phi} \xi^\pi(s, a, \phi) R(\phi)
\end{equation}
This decomposition allows an agent to compute the value of an existing policy $\pi$ under a \textit{new} reward function $R'$ simply by replacing $R(\phi)$ in the equation above, without requiring new environmental interactions.

\subsection{\textbf{Generalized Policy Improvement (GPI)}}
The primary mechanism for transfer in this framework is Generalized Policy Improvement (GPI). Suppose an agent has learned a library of policies $\Pi = \{\pi_1, \dots, \pi_n\}$ for various previous tasks. When facing a new task with reward $R_{new}$, the agent can estimate the value of \textit{all} stored policies using their respective learned $\xi$-functions:
\begin{equation}
    Q^{\pi_i}_{new}(s, a) = \sum_{\phi \in \Phi} \xi^{\pi_i}(s, a, \phi) R_{new}(\phi)
\end{equation}
GPI defines a new policy $\pi_{GPI}$ that selects the action maximizing the value across all known policies:
\begin{equation}
    \pi_{GPI}(s) \in \arg\max_{a} \max_{i} Q^{\pi_i}_{new}(s, a)
\end{equation}
This transfer mechanism allows the agent to instantly perform at the level of the best-suited previous policy. Expansions such as Constrained GPI \cite{kim2022constrained} or $\lambda$-GPI \cite{alegre2023multi} further refine this selection process by incorporating safety constraints or approximate models.

\section{Algorithmic setup and notation}

We follow notation similar to \cite{reinke2021successor}. We assume a finite action set $\mathcal{A}$ and a finite feature set $\Phi$. There are $m$ tasks indexed by $j\in\{1,\dots,m\}$. For each task $j$ the agent maintains a parametric successor-feature approximator $\xi_j: \mathcal{S}\times\mathcal{A}\times\Phi\times\R^d \to \R$, given by $(s,a,\phi,\theta^{(j)})\mapsto \xi_j(s,a,\phi;\theta^{(j)})$. We collect all task-parameters into the joint vector $\Theta = (\theta^{(1)},\dots,\theta^{(m)}) \in \R^{m d}$.

At time $k$ the environment (under the current task $i_k$) produces a transition $(s_k,a_k,s'_k)$. The behaviour may depend on past iterates through Generalized Policy Improvement (GPI). The algorithm selects a prior policy index $c_k$ by GPI, choosing from the set
\begin{equation}
S(\Theta,s'_k,i_k) = \arg\max_{(k',a')\in\{1,\dots,m\}\times\mathcal{A}} 
\; Q_{k'}(s'_k,a';\theta^{(k')}) ,
\end{equation}
where the action-value function $Q_{k'}$ is defined as
\begin{equation}
Q_{k'}(s',a';\theta^{(k')}) = \sum_{\phi\in\Phi} \xi_{k'}(s',a',\phi;\theta^{(k')})\,\mathcal{R}_{i_k}(\phi),
\end{equation}
and $\mathcal{R}_{i_k}$ is the current task's reward expressed on features. If the maximizer is not unique the set $S(\Theta,s',i_k)$ contains multiple pairs. This is handled by considering the convex hull of the gradients corresponding to all maximizing pairs.

The algorithm updates the parameter blocks for the current task $i_k$ and for the chosen prior $c_k$ (if $c_k\neq i_k$) by taking full gradients of the squared Bellman residual evaluated at the single sampled transition. All other blocks are left unchanged.

\section{Averaged Loss and Full Gradient}

To address the bias inherent in minimizing the Bellman error with stochastic transitions, we employ a modified experience replay strategy (similar to \cite{avrachenkov2021full}). For a visited state-action pair $(s, a)$ and current task $i$, let $\mathcal{K} = \{(r_p, s'_p, \phi_p)\}_{p=1}^N$ be a set of $N$ transitions sampled from the replay buffer $\mathcal{D}$ that start with $(s, a)$.

Let $(k',a')$ be the specific action-value maximizing pair chosen via GPI, i.e., $(k',a') \in S(\Theta, \E[s'|s,a], i)$. We define the \textit{averaged} squared Bellman residual for task $j$ as:

\begin{equation}
\resizebox{1\linewidth}{!}{
$
\begin{split}
\ell_j&\big(\Theta, (s, a), \mathcal{K}, (k', a')\big) := \\
&\left( \frac{1}{N} \sum_{p=1}^N \left[ \ind(\phi = \phi_p) + \gamma \xi_j(s'_p, a', \phi; \theta^{(j)}) \right] - \xi_j(s, a, \phi; \theta^{(j)}) \right)^2
\end{split}
$
}
\label{eq:avg_loss}
\end{equation}

This loss minimizes the distance between the current prediction and the \textit{empirical mean} of the target, which approximates the true expectation $\E[\cdot|s,a]$ as $N \to \infty$.

The per-block full gradient for task $j$ is computed by taking the gradient of \eqref{eq:avg_loss} with respect to $\theta^{(j)}$. Crucially, following the Full-Gradient DQN scheme, we treat occurrences of $\theta$ in both the target and the prediction as variables to be differentiated:

\begin{equation}
\resizebox{1\linewidth}{!}{
$
\begin{aligned}
&\nabla_{\theta^{(j)}} \ell_j = \sum_{\phi \in \Phi} 2 \cdot \\
&\underbrace{\left[ \left( \frac{1}{N}\sum_{p=1}^N (\ind_p + \gamma \xi_j(s'_p, a', \phi; \theta^{(j)})) \right) - \xi_j(s, a, \phi; \theta^{(j)}) \right]}_{\text{Averaged Bellman Error Component}} \\
&\cdot \underbrace{\left( \left( \frac{1}{N}\sum_{p=1}^N \gamma \nabla_{\theta^{(j)}} \xi_j(s'_p, a', \phi; \theta^{(j)}) \right) - \nabla_{\theta^{(j)}} \xi_j(s, a, \phi; \theta^{(j)}) \right)}_{\text{Averaged Feature Gradient Component}}
\end{aligned}
$
}
\label{eq:avg_gradient}
\end{equation}

The full joint update vector $G(\Theta)$ uses this gradient in the appropriate blocks. If the current task is $i$ and the chosen prior is $c$, the $j$-th block of the stochastic update vector $G$ is:
\begin{equation}
    G_j(\Theta, s, a, s', i, c) = 
    \begin{cases} 
        \nabla_{\theta^{(i)}} \ell_i & \text{if } j = i \\
        \nabla_{\theta^{(c)}} \ell_c & \text{if } j = c \text{ and } c \neq i \\
        \mathbf{0} & \text{otherwise}
    \end{cases}
    \label{eq:joint_gradient_components}
\end{equation}

This formulation removes the stochastic bias associated with the single-sample squared error, ensuring the update direction aligns with the gradient of the true Mean Squared Bellman Error.

\section{Assumptions}

\begin{assumption}[Properties of the System, adapted from \cite{avrachenkov2021full}]\label{ass:C1}
\
\begin{enumerate}
    \item \textbf{(Regularity and Boundedness)} For every policy $j \in \{1, \dots, m\}$, the map $(s,a,\phi,\theta^{(j)})\mapsto \xi_j(s,a,\phi;\theta^{(j)})$ is bounded and twice continuously differentiable in its parameter vector $\theta^{(j)}$, with bounded first and second derivatives.

    \item \textbf{(Almost-Everywhere Uniqueness of GPI Maximizer)} For any given state $s'$ and current task index $i$, let the value of taking action $a'$ according to policy $k$ be $Q_k(s',a';\theta^{(k)}) = \sum_{\phi'} \xi_k(s',a',\phi';\theta^{(k)})\mathcal{R}_i(\phi')$. Define the set of optimizers for the GPI step as:
    \begin{equation}
        S(\Theta, s', i) = \underset{(k,a) \in \{1,\dots,i\} \times \mathcal{A}}{\mathrm{argmax}} \; Q_k(s', a; \theta^{(k)})
    \end{equation}
    The set of joint parameters $\Theta$ for which this optimizer is not unique is assumed to have Lebesgue measure zero. That is, the set
    \begin{equation}
        \{\Theta \in \R^{md} \;:\; |S(\Theta, s', i)| > 1\}
    \end{equation}
    has measure zero for any fixed $(s', i)$.

    \item \textbf{(Finiteness of Critical Points)} Let the joint Bellman error be defined as the sum of individual policy errors: $E(\Theta) = \sum_{j=1}^m E_j(\theta^{(j)})$. The set of critical points of $E(\Theta)$, where the generalized gradient $\partial E(\Theta)$ contains the zero vector, is assumed to be finite.
\end{enumerate}
\end{assumption}

\begin{assumption}[Stability of Iterates]\label{ass:C2}
The sequence of joint parameter iterates $\{\Theta_k\}_{k \ge 0}$ generated by the algorithm remains almost surely bounded. That is, $\sup_k \|\Theta_k\| < \infty$ a.s.
\end{assumption}

\begin{assumption}[Decomposition of Stochastic Error]\label{ass:C3}
Let $\tilde{G}_k = G(s_k, a_k, \mathcal{K}_k, \Theta_k)$ denote the stochastic update vector computed at step $k$ using a batch of $N_k$ transitions $\mathcal{K}_k$. Let $\bar{G}_k(\Theta_k) = -\nabla E(\Theta_k)$ be the true mean-field update, corresponding to the exact gradient of the aggregate Bellman error $E(\Theta)$ weighted by the stationary distribution.

The update vector can be decomposed into the true mean field, a martingale difference noise component, and a residual bias component:
\[
\tilde{G}_k = \bar{G}_k(\Theta_k) + M_{k+1} + \varepsilon_k
\]
where the terms satisfy the following conditions:
\begin{enumerate}
    \item \textbf{(Martingale Difference Noise)} The term $M_{k+1} = \tilde{G}_k - \E[\tilde{G}_k \mid \mathcal{F}_k]$ is a martingale difference sequence with respect to the filtration $\mathcal{F}_k$, satisfying $\E[M_{k+1} \mid \mathcal{F}_k] = \mathbf{0}$ and having bounded conditional second moments: $\E[\|M_{k+1}\|^2 \mid \mathcal{F}_k] \le C(1 + \|\Theta_k\|^2)$ for some constant $C > 0$.

    \item \textbf{(Residual Approximation Bias)} The term $\varepsilon_k = \E[\tilde{G}_k \mid \mathcal{F}_k] - \bar{G}_k(\Theta_k)$ represents the bias arising from using a finite batch size $N_k$ to approximate the conditional expectations in the full gradient. This bias is assumed to diminish asymptotically as the batch size increases and be summable in a weighted sense:
    \begin{itemize}
        \item $\|\varepsilon_k\| \to 0$ almost surely (as $N_k \to \infty$).
        \item $\sum_{k=0}^\infty \alpha_k \E[\|\varepsilon_k\|] < \infty$.
    \end{itemize}
\end{enumerate}
\end{assumption}

\begin{assumption}[Step-sizes and visitation]\label{ass:steps}
The scalar step-sizes $\{\alpha_k\}$ satisfy
$\alpha_k>0$, $\sum_k \alpha_k = \infty$, and $\sum_k \alpha_k^2 < \infty$.
Every coordinate (block) $j$ is updated infinitely often almost surely.
\end{assumption}

\begin{assumption}[Finite actions and finite tie-set]\label{ass:finite}
The action set $\mathcal{A}$ is finite and $m$ (number of tasks) is finite, so for each $(\Theta,s',i)$ the maximizer set $S(\Theta,s',i)$ is a finite subset of $\{1,\dots,m\}\times\mathcal{A}$.
\end{assumption}

\section{Set-valued mean-field map}

For a fixed transition sample $(s,a,s')$ and task index $i$, we use the set of GPI maximizers $S(\Theta,s',i)$ defined in Assumption~\ref{ass:C1}.2. The single-step, conditional mean-field update from Assumption 3 is a selection from the following set-valued map:
\begin{multline*}
H(\Theta,(s,a,i)) \;:=\; -\,\E_{s'\sim p(\cdot\mid s,a)} \\
\Big[ \co\Big\{ \nabla_\Theta \ell\big(\Theta,(s,a,s',i),(k',a')\big) \;:\; (k',a')\in S(\Theta,s',i) \Big\}\Big],
\end{multline*}
where $\nabla_\Theta \ell$ is the joint gradient from \eqref{eq:avg_gradient}. By Assumption~\ref{ass:C1}.1 (Regularity) and Assumption~\ref{ass:finite} (Finiteness), each gradient term is a continuous function of $\Theta$ and the set of maximizers is finite. Therefore, the set-valued map $H$ is nonempty, compact, convex-valued, and upper semi-continuous in $\Theta$.

The mean-field dynamics are governed by the averaged map
\begin{equation*}
\overline H(\Theta) \;:=\; \E_{(s,a,i)\sim\mu}\big[ H(\Theta,(s,a,i)) \big],
\end{equation*}
where $\mu$ is the stationary sampling distribution over state-action-task triplets. This map inherits the properties of $H$.

\section{Main theorem}
The results of this section are derived by adapting the theoretical framework of Full Gradient DQN \cite{avrachenkov2021full} to the successor-feature setting considered here. While the overall theorem structure and proof technique follow \cite{avrachenkov2021full}, our contribution is to reformulate the analysis for successor-feature representations, where the Bellman operator is vector-valued, the GPI step induces a maximization over both actions and policies, and parameter updates are coupled across tasks, and to establish the resulting convergence guarantee for FG-SFRQL via a set-valued stochastic approximation framework.
\begin{theorem}[Joint-iterate convergence]
Under Assumptions 1--5, the joint iterate sequence $\{\Theta_k\}$ produced by the FG-SFRQL algorithm converges almost surely to a stationary point $\Theta^\ast$ of the aggregate Bellman error $E(\Theta) = \sum_{j=1}^m E_j(\theta^{(j)})$, i.e., a point satisfying $\nabla E(\Theta^\ast)=0$.
\end{theorem}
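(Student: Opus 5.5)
The plan is to cast the FG-SFRQL recursion as a stochastic recursive inclusion and apply the Benaïm--Hofbauer--Sorin theory of differential inclusions. The limiting inclusion is
\[
\dot\Theta(t) \in \overline H(\Theta(t)),
\]
and its internally chain transitive sets are to be identified using $E(\Theta)$ as a Lyapunov function, following the structure of \cite{avrachenkov2021full}.

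\textbf{Step 1: rewrite the iteration.} Using Assumption~\ref{ass:C3}, write the iterate as
\[
\Theta_{k+1} = \Theta_k + \alpha_k\big(y_k + M_{k+1} + \varepsilon_k\big), \qquad y_k \in \overline H(\Theta_k).
\]
The mean-field selection $y_k$ is $-\nabla E(\Theta_k)$ whenever the GPI maximizer is unique. Otherwise it is a point of the convex hull. Absorbing the blockwise structure of \eqref{eq:joint_gradient_components} into $\mu$ relies on Assumption~\ref{ass:steps}: every block is updated infinitely often, so the stationary weights give each block positive mass.

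\textbf{Step 2: control the perturbations.} The map $\overline H$ is nonempty, compact- and convex-valued, and upper semicontinuous, as shown in the set-valued mean-field section. It also has linear growth, because Assumption~\ref{ass:C1}.1 makes $\xi_j$ and its derivatives bounded. Boundedness of the iterates is Assumption~\ref{ass:C2}.

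For the noise, the martingale $\sum_k \alpha_k M_{k+1}$ converges a.s. This follows from the conditional second-moment bound, square-summability of the step sizes, and the a.s. bound $\sup_k\|\Theta_k\|<\infty$, via the martingale convergence theorem applied with a localization on $\{\sup_k\|\Theta_k\|\le R\}$.

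For the bias, note that $\sum_k \alpha_k \E\|\varepsilon_k\|<\infty$ implies $\sum_k\alpha_k\|\varepsilon_k\|<\infty$ a.s., so the bias tails vanish.

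Together these give the Benaïm--Hofbauer--Sorin conclusion: the piecewise-linear interpolation of $\{\Theta_k\}$ is an asymptotic pseudo-trajectory of the inclusion. Hence its limit set is a compact, connected, internally chain transitive set.

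\textbf{Step 3: Lyapunov argument (the main obstacle).} I want $E$ to be a Lyapunov function for the set $\Lambda=\{\Theta: 0\in\partial E(\Theta)\}$. The delicate point is that $\overline H$ is a convexification of GPI-dependent gradients. It coincides with $\{-\nabla E\}$ only off the tie set, and $E$ itself is only Clarke-regular there.

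I would argue in two parts. First, by Assumption~\ref{ass:C1}.2 the tie set has measure zero, so $E$ is locally Lipschitz and $\overline H(\Theta)\subseteq -\partial E(\Theta)$, with $\partial E$ the Clarke generalized gradient. This uses Clarke's characterization of the generalized gradient as the convex hull of limits of gradients taken at points off a null set. Second, along any absolutely continuous solution, the chain rule for such functions gives
\[
\tfrac{d}{dt}E(\Theta(t)) = -\|v\|^2 \le 0,
\]
where $v$ is the minimal-norm element, with strict decrease outside $\Lambda$. Verifying this chain rule is the step I expect to need the most care. If a regularity assumption beyond what is stated is required, I would fall back on the measure-zero tie set to argue that a.e.\ in time $\Theta(t)$ lies where $E$ is differentiable.

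\textbf{Step 4: conclude.} By Assumption~\ref{ass:C1}.3 the set $\Lambda$ is finite, so $E(\Lambda)$ has empty interior. The Benaïm--Hofbauer--Sorin Lyapunov proposition then implies the limit set lies in $\Lambda$. A connected subset of a finite set is a singleton, so $\Theta_k\to\Theta^\ast$ a.s. with $0\in\partial E(\Theta^\ast)$. This reads as $\nabla E(\Theta^\ast)=0$ wherever $E$ is differentiable at $\Theta^\ast$, which is the interpretation of the stationarity claim in the theorem.
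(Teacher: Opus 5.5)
\paragraph{Verdict.} There is a genuine gap: your Step 3 does not go through as written, and Step 4 inherits it. Your Step 2 and the overall Bena\"{\i}m--Hofbauer--Sorin architecture are fine.

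\paragraph{Where Step 3 fails.} The inference ``the tie set has measure zero, so $E$ is locally Lipschitz'' is a non sequitur, and here the conclusion is false in general. In FG-DQN the target $\max_{a'}Q(s',a';\theta)$ is a maximum of smooth functions, hence Lipschitz. In FG-SFRQL the target is the \emph{vector} $\xi_j(s',a',\cdot\,;\theta^{(j)})$ evaluated at the GPI-selected action. A tie $Q_{k_1}(s',a_1)=Q_{k_2}(s',a_2)$ only equates two scalar projections onto $\mathcal{R}_i$. The vectors $\xi_j(s',a_1,\cdot)$ and $\xi_j(s',a_2,\cdot)$ generally differ, so $\ell_j$ jumps across the tie set. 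So does $E$ whenever the sampling distribution has atoms, e.g.\ the finite grid world. Then Clarke's $\partial E$ is undefined. Moreover, the Bena\"{\i}m--Hofbauer--Sorin Lyapunov proposition you invoke in Step 4 requires a \emph{continuous} $V$, so it cannot be applied with $V=E$.

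Even granting local Lipschitz continuity, two steps remain unjustified.
\begin{enumerate}
\item $\overline H\subseteq-\partial E$ does not follow from Clarke's characterization. The convex hull is formed sample by sample inside the expectation, while Clarke's sum rule only gives $\partial(\E\,\ell)\subseteq\E\,\partial\ell$ (compare $|x|+(-|x|)$ at $0$). Also, a maximizing pair need not be the unique maximizer at any nearby point, so its branch gradient need not be a limit of gradients.
\item The chain rule along absolutely continuous curves needs Clarke regularity or path-differentiability (e.g.\ definability). Assumptions 1--5 do not provide either.
\end{enumerate}
Your fallback also fails. A Lebesgue-null subset of $\R^{md}$ can contain whole arcs of solutions; sliding along the switching surface is exactly what the convexification in $\overline H$ is there to capture. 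So ``a.e.\ in $\Theta$'' does not give ``a.e.\ in $t$''. Separately, in Step 1, ``every block is updated infinitely often'' does not give each block positive stationary weight, since infinitely many updates can have vanishing frequency.

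\paragraph{Comparison with the paper.} The paper does not supply the missing ingredient. It makes exactly your fallback move (``by Assumption 1.2 \dots\ this reduces the differential inclusion to an ODE'') and computes $\frac{d}{dt}E=-\|\nabla E_i\|^2-\|\nabla E_c\|^2$ along that ODE. It then closes with a discrete-time Taylor expansion and an almost-supermartingale argument, which presuppose that $E$ is smooth along the iterates.

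Either route therefore needs an extra hypothesis. The most natural one is that $E$ is $C^1$ and the mean field is the single-valued $-\nabla E$, which is how Assumption 3 and the conclusion $\nabla E(\Theta^\ast)=0$ literally read. More generally, you could assume $E$ is locally Lipschitz and path-differentiable with $\overline H\subseteq-\partial E$.

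Under such a hypothesis your Step 3 reduces to $\frac{d}{dt}E=-\|\nabla E\|^2$ (resp.\ $-\mathrm{dist}(0,\partial E)^2$), and Steps 2 and 4 go through as written. Your ending then becomes more rigorous than the paper's: the limit set is a connected internally chain transitive subset of the finite set $\Lambda$, hence a single point. The paper instead passes directly from summability of $\alpha_k(\|\nabla E_{i_k}\|^2+\|\nabla E_{c_k}\|^2)$ and ``tasks are visited infinitely often'' to convergence of $\Theta_k$, which does not follow.
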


\begin{proof}
The proof follows the o.d.e. approach for stochastic approximation, adapted for set-valued maps and differential inclusions \cite{borkar2008stochastic, yaji2018stochastic}.

\paragraph{\textbf{Update rule}}
The FG-SFRQL update is given by $\Theta_{k+1} = \Theta_k - \alpha_k \tilde{G}_k$, where $\tilde{G}_k$ is the stochastic update vector computed at step $k$. This vector has at most two non-zero blocks, corresponding to the current task $i_k$ and the GPI-selected prior policy $c_k$. We analyze the update by decomposing it into its conditional mean and a noise term. Conditioned on the history $\mathcal{F}_k = \sigma(\Theta_m, s_m, a_m, s'_m \text{ for } m < k; \Theta_k, s_k, a_k)$, let $\bar{G}_k(\Theta_k) = \E[\tilde{G}_k \mid \mathcal{F}_k]$. We define the martingale difference sequence $M_{k+1} := \tilde{G}_k - \bar{G}_k(\Theta_k)$. Using the decomposition from Assumption~\ref{ass:C3}, the update can be written as:
\begin{equation}
\Theta_{k+1} = \Theta_k - \alpha_k \left( \bar{G}_k(\Theta_k) + M_{k+1} + \varepsilon_k \right).
\label{eq:update_decomposed}
\end{equation}

\paragraph{\textbf{The set-valued map H}}
The term $\bar{G}_k(\Theta_k)$ is the conditional mean of the stochastic update vector. The update for block $i_k$ is the negative gradient of the Bellman error $E_{i_k}(\theta^{(i_k)})$, while the update for block $c_k$ is the negative gradient of the Bellman error $E_{c_k}(\theta^{(c_k)})$. When the GPI maximizer for the next action is not unique, the update is a selection from a set. We define the set-valued map $H$ for a fixed state-action-task triplet $(s,a,i)$ as:
\begin{multline}
H(\Theta,(s,a,i)) \;:=\; \E_{s'\sim p(\cdot\mid s,a)} \\
\left[ \co\left\{ G(\Theta,(s,a,s',i,c)) \;:\; c \in \underset{k'}{\mathrm{argmax}} Q_{k'}(s',a') \right\}\right],
\end{multline}
where $\co\{\cdot\}$ denotes the convex hull and $G$ is the joint update vector whose $j$-th block is $-\nabla E_j(\theta^{(j)})$ if $j \in \{i,c\}$ and zero otherwise. The term $\bar{G}_k(\Theta_k)$ is thus a selection from $H(\Theta_k, (s_k,a_k,i_k))$.

\paragraph{\textbf{Stochastic recursive inclusion and verification of conditions for convergence}}
Using the map $H$, the update rule \eqref{eq:update_decomposed} is a stochastic recursive inclusion:
\begin{equation}
\Theta_{k+1} \in \Theta_k - \alpha_k \left( H(\Theta_k, (s_k,a_k,i_k)) + M_{k+1} + \varepsilon_k \right).
\label{eq:sri}
\end{equation}
We verify the conditions for the convergence theorem of such inclusions (Theorem 7.1 of \cite{yaji2018stochastic}). Assumptions~\ref{ass:C1}-\ref{ass:finite} and~\ref{ass:steps} are constructed to meet these conditions: (A1) $H$ is non-empty, compact, convex-valued, and upper semi-continuous; (A2) The driving process is Markov; (A3) Step-sizes are Robbins-Monro; (A4) Noise terms are summable; and (A5) Iterates are stable. With these conditions satisfied, the iterates $\{\Theta_k\}$ will asymptotically track the solutions of a limiting differential inclusion.

\paragraph{\textbf{Asymptotic behavior and limiting differential inclusion}}
In a sequential task setting, the system is non-stationary. However, if we consider a system where tasks are sampled from a stationary distribution $\pi(i)$, the limiting behavior of the iterates is described by the differential inclusion:
\begin{align}
\dot{\Theta}(t) &\in -\overline{H}(\Theta(t)), \nonumber\\[4pt]
\text{where} \quad 
\overline{H}(\Theta) &= \sum_{i} \pi(i) \E_{(s,a)\sim\mu_i}[H(\Theta,(s,a,i))].
\label{eq:di_proof}
\end{align}
By Assumption~\ref{ass:C1}.2, the GPI maximizer is unique for almost every $\Theta$. This reduces the differential inclusion to an ODE $\dot{\Theta}(t) = V(\Theta(t))$, where $V(\Theta)$ is the mean-field update vector field. For a given active task $i$ and GPI-selected policy $c$, the block components of this vector field are $V_j(\Theta) = -\nabla E_j(\theta^{(j)})$ for $j \in \{i,c\}$ and $V_j(\Theta) = 0$ otherwise.

\paragraph{\textbf{Convergence Analysis}}

We analyze the aggregate Bellman error $E(\Theta) = \sum_{j=1}^m E_j(\theta^{(j)})$. Its time derivative along the trajectories of the limiting ODE is given by the inner product $\frac{d}{dt} E(\Theta) = \langle \nabla E(\Theta), \dot{\Theta} \rangle$. Substituting $\dot{\Theta} = V(\Theta)$:

\begin{align}
\frac{d}{dt} E(\Theta(t)) &= \langle \nabla E(\Theta), V(\Theta) \rangle \nonumber \\
&= \sum_{j=1}^m \langle \nabla E_j(\theta^{(j)}), V_j(\Theta) \rangle \nonumber \\
&= \langle \nabla E_i(\theta^{(i)}), -\nabla E_i(\theta^{(i)}) \rangle \nonumber \\
&\quad + \langle \nabla E_c(\theta^{(c)}), -\nabla E_c(\theta^{(c)}) \rangle \nonumber \\
&= -\|\nabla E_i(\theta^{(i)})\|^2 - \|\nabla E_c(\theta^{(c)})\|^2.
\label{eq:derivative}
\end{align}

Since the squared norms are non-negative, we have $\frac{d}{dt} E(\Theta(t)) \le 0$. This establishes that even though the update direction $V(\Theta)$ is not the gradient of $E(\Theta)$, it is a descent direction for it. The total error is guaranteed to be non-increasing along the system's trajectories.

The discrete-time counterpart follows from a Taylor expansion of $E(\Theta)$. The change in energy after one step is analyzed by starting with the first-order expansion of the function around the current iterate $\Theta_k$:
\begin{multline*}
E(\Theta_{k+1}) = E(\Theta_k) + \langle \nabla E(\Theta_k), \Theta_{k+1} - \Theta_k \rangle \\
+ O(\|\Theta_{k+1} - \Theta_k\|^2).
\end{multline*}
We substitute the update rule $\Theta_{k+1} - \Theta_k = -\alpha_k (\bar{G}_k(\Theta_k) + M_{k+1} + \varepsilon_k)$. Since the update step is proportional to $\alpha_k$, the remainder term is $O(\alpha_k^2)$. Taking the conditional expectation with respect to the history $\mathcal{F}_k$ and noting that $\E[M_{k+1} \mid \mathcal{F}_k]=0$, we get:
\begin{multline*}
\E[E(\Theta_{k+1}) \mid \mathcal{F}_k] \\
= E(\Theta_k) - \alpha_k \langle \nabla E(\Theta_k), \bar{G}_k(\Theta_k) + \varepsilon_k \rangle + O(\alpha_k^2).
\end{multline*}
The key term is the inner product $\langle \nabla E(\Theta_k), \bar{G}_k(\Theta_k) \rangle$. The vector $\nabla E(\Theta_k)$ has the gradient of each policy's Bellman error in its corresponding block. The mean-field update vector $\bar{G}_k(\Theta_k)$ has non-zero components only in the blocks for the current task $i_k$ and the GPI-selected policy $c_k$. The inner product thus becomes:
\begin{align*}
\langle \nabla E(\Theta_k), &\bar{G}_k(\Theta_k) \rangle \nonumber = \sum_{j=1}^m \langle \nabla E_j(\theta_k^{(j)}), [\bar{G}_k(\Theta_k)]_j \rangle \\
&= \|\nabla E_{i_k}(\theta_k^{(i_k)})\|^2 + \|\nabla E_{c_k}(\theta_k^{(c_k)})\|^2.
\end{align*}
Substituting this back into the expansion yields the evolution of the expected energy:
\begin{multline}
\E[E(\Theta_{k+1}) \mid \mathcal{F}_k] 
= E(\Theta_k) - \alpha_k \Big( \|\nabla E_{i_k}(\theta_k^{(i_k)})\|^2 \\
+ \|\nabla E_{c_k}(\theta_k^{(c_k)})\|^2 + \langle \nabla E(\Theta_k), \varepsilon_k \rangle \Big) + O(\alpha_k^2).
\label{eq:energy_update}
\end{multline}

This again has the form required for the almost supermartingale convergence theorem. Given the assumptions, it follows that $E(\Theta_k)$ converges a.s., and $\sum_k \alpha_k (\|\nabla E_{i_k}\|^2 + \|\nabla E_{c_k}\|^2)$ must be finite. As tasks are visited infinitely often, this implies that $\|\nabla E_j(\theta^{(j)})\| \to 0$ for all $j$. Combined with Assumption~\ref{ass:C1}.3 (finiteness of critical points), the iterates $\{\Theta_k\}$ must converge almost surely to a single stationary point $\Theta^\ast$ where $\nabla E(\Theta^\ast)=0$.
\end{proof}

\section{Experiments}
\label{sec:exp}

\subsection{\textbf{Experimental Setup}}

We evaluate on one discrete grid-world and two continuous control domains. In all settings, transition dynamics remain fixed across tasks while reward functions vary.

\subsubsection{\textbf{Four rooms \cite{chevalier2023minigrid}} - Discrete Features}
The agent navigates in a four-room grid-world, navigating to a fixed goal while collecting objects. Each object yields a reward only upon first collection, and movement is restricted to four cardinal directions with walls acting as barriers.

\textbf{Tasks:} The state space $s$ concatenates one-hot grid coordinates with a binary inventory vector of collected objects. The feature vector $\phi(s, a, s') \in \{0, 1\}^{k+1}$ indicates whether an object of type $i \in \{1, \dots, k\}$ was collected or the goal reached. Tasks are defined by reward weights $\mathbf{w}_j$ such that $r = \phi^\top \mathbf{w}_j$, testing the agent's ability to adapt to varying object values.

\subsubsection{\textbf{Reacher \cite{gymnasium_robotics2023github}} - Continuous Features}
We consider a two-joint robotic arm tasked with reaching a target location. The state includes joint angles, velocities, and the vector to the target, while actions are discretized joint torques.

\textbf{Tasks:}
Each task corresponds to a different target location. Features $\phi$ combine the state with a proximity metric, enabling a linear reward approximation based on distance to the goal.

\subsubsection{\textbf{PointMaze \cite{gymnasium_robotics2023github}} - Continuous Features}
An agent navigates a 2D maze to reach a goal. The state includes the agent’s observation and goal coordinates. The continuous action space is discretized to 9 actions by mapping $\{-1,0,1\}^2$ to control inputs.

\textbf{Tasks:}
Each task is defined by a distinct goal location $g_i$, with dense reward
\begin{equation}
    r_i(s,a,s') = \exp\left(-\|x'(s') - g_i\|\right),
\end{equation}
where $x'(s')$ denotes the agent’s position. We evaluate on multiple layouts (\texttt{UMaze}, \texttt{Medium}, \texttt{Large}) of increasing complexity.

\subsection{\textbf{Baselines and Hyperparameters}}
We evaluate the following methods:

\begin{enumerate}
    \item \textbf{DQN}, the standard Deep Q-Network baseline.
    \item \textbf{FG-DQN}, which applies full gradient updates without successor features.
    \item \textbf{SFRQL}, the standard successor feature representations approach with semi-gradient updates.
    \item \textbf{FG-SFRQL (ours)}, our proposed method.
\end{enumerate}

Training hyperparameters are reported in Table~\ref{tab:hyperparams}.

\begin{table}[h]
\centering
\caption{Experimental Hyperparameters}
\label{tab:hyperparams}
\begin{tabular}{lccc}
\hline
\textbf{Parameter} & \textbf{4-Room} & \textbf{Reacher} & \textbf{PointMaze} \\
\hline
Training steps / task & 10,000 & 100,000 & 30,000 \\
Number of tasks       & 6      & 4        & 8      \\
Batch size            & 64     & 64       & 512    \\
\hline
Replay buffer size    & \multicolumn{3}{c}{200,000} \\
Discount factor ($\gamma$) & \multicolumn{3}{c}{0.95} \\
Exploration rate ($\epsilon$) & \multicolumn{3}{c}{0.60} \\
Max episode horizon ($T$) & \multicolumn{3}{c}{200} \\
SFQL learning rate ($\alpha$) & \multicolumn{3}{c}{0.001} \\
Reward weight LR ($\alpha_w$) & \multicolumn{3}{c}{0.5} \\
\hline
\end{tabular}
\end{table}

\subsection{\textbf{Results and Analysis}}
\subsubsection{\textbf{Comparison}}
\label{subsec:comparison}
Figure~\ref{fig:cumulative_rewards} presents the cumulative training reward across tasks for Four-Rooms, Reacher, Pointmaze environments. 

In all settings, FG-SFDQN exhibits significantly faster reward accumulation, indicating more efficient credit assignment and improved utilization of experience. Across task switches, FG-SFDQN rapidly adapts and maintains a steeper growth in cumulative reward compared to DQN and semi-gradient SFDQN. This consistent advantage suggests that full-gradient optimization leads to more effective updates of the successor feature representation, enabling improved generalization and transfer across tasks.

\subsubsection{\textbf{Effect of Averaging on Algorithm 1}}
\label{subsec:ablation_averaging}

Figure~\ref{fig:ablation_avg} reports the ablation study on the effect of averaging on Four rooms and Reacher environments. The averaging variant implements the theoretically motivated update rule, where we sample a batch of $N$ transitions conditional on a pivot $(s,a)$, then compute the empirical mean of the \emph{successor feature vectors} output by the network, $\bar{\xi}(s', a') = \frac{1}{N}\sum \xi(s'_i, a'; \theta)$, and use this averaged vector to select the GPI prior and compute the Bellman target.

Despite the theoretical stability provided by approximating the expected feature vector $\mathbb{E}[\xi(s', a')]$, the results indicate that averaging is detrimental to performance in this setting. All averaged variants produce substantially lower cumulative reward than the single-sample FG-SFDQN. While larger $N$ reduces variance, it does not translate to faster convergence or higher returns and struggles to match the aggressive learning curve of the un-averaged update. We hypothesize the following reasons for this counter-intuitive result.
\begin{itemize}
    \item \textbf{Beneficial stochasticity.} Stochastic single-sample updates aid exploration in parameter space and help avoid suboptimal regions, while averaging reduces this noise, leading to more conservative updates and slower policy improvement.
    \item \textbf{Dampening of strong learning signals.} Averaging multiple transitions can dilute strong updates with weaker ones, reducing their immediate impact and slowing early learning.
\end{itemize}

\begin{figure}[!t]
\centering
\includegraphics[width=\linewidth]{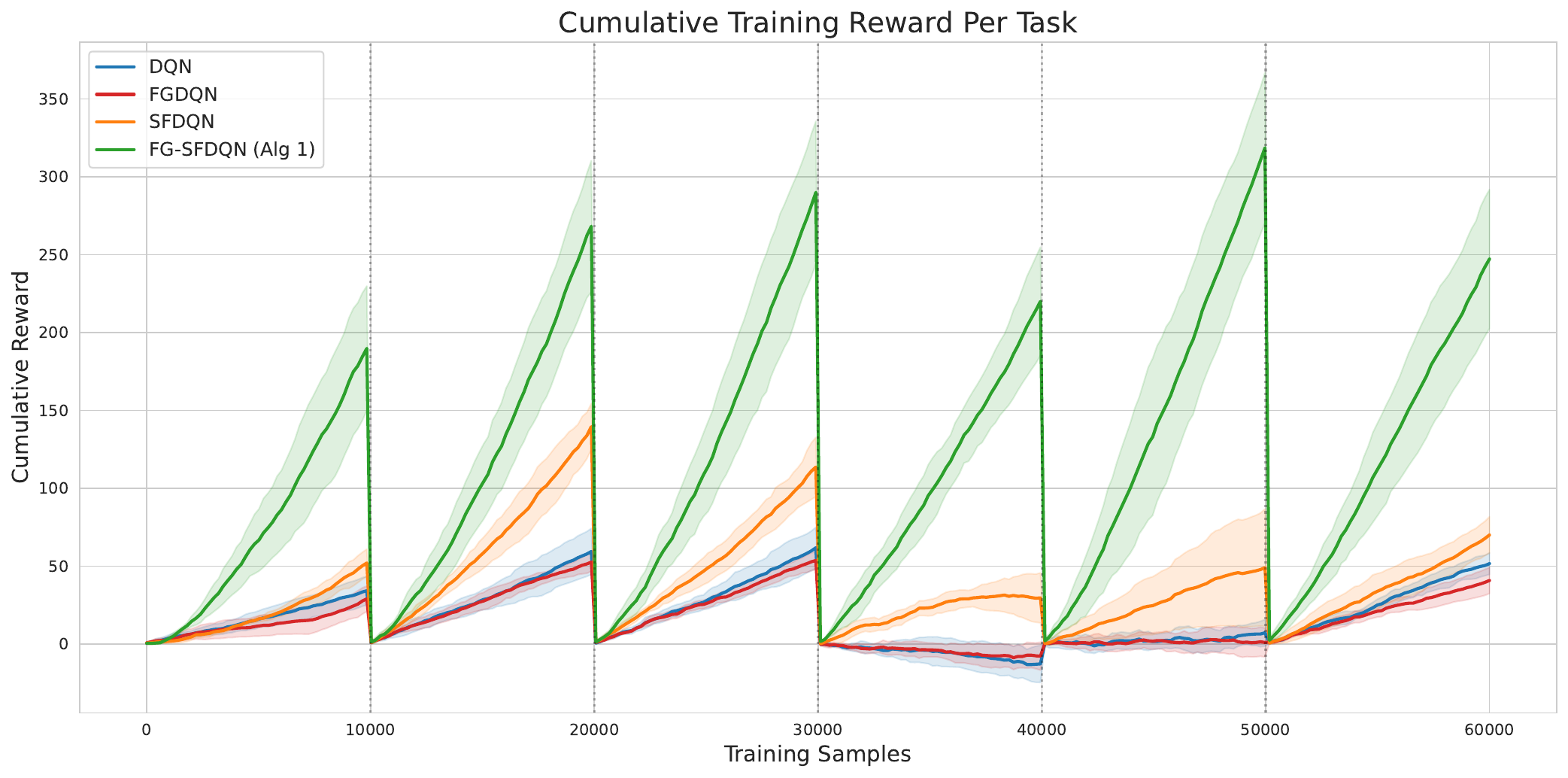}
(a) Four Rooms
\caption{
Cumulative training reward across tasks. FG-SFDQN consistently achieves faster learning and higher cumulative returns than baselines across all environments. Resets to zero indicate task transitions.
}
\label{fig:cumulative_rewards}
\end{figure}

\begin{figure}[!t]\ContinuedFloat
\centering
\includegraphics[width=\linewidth]{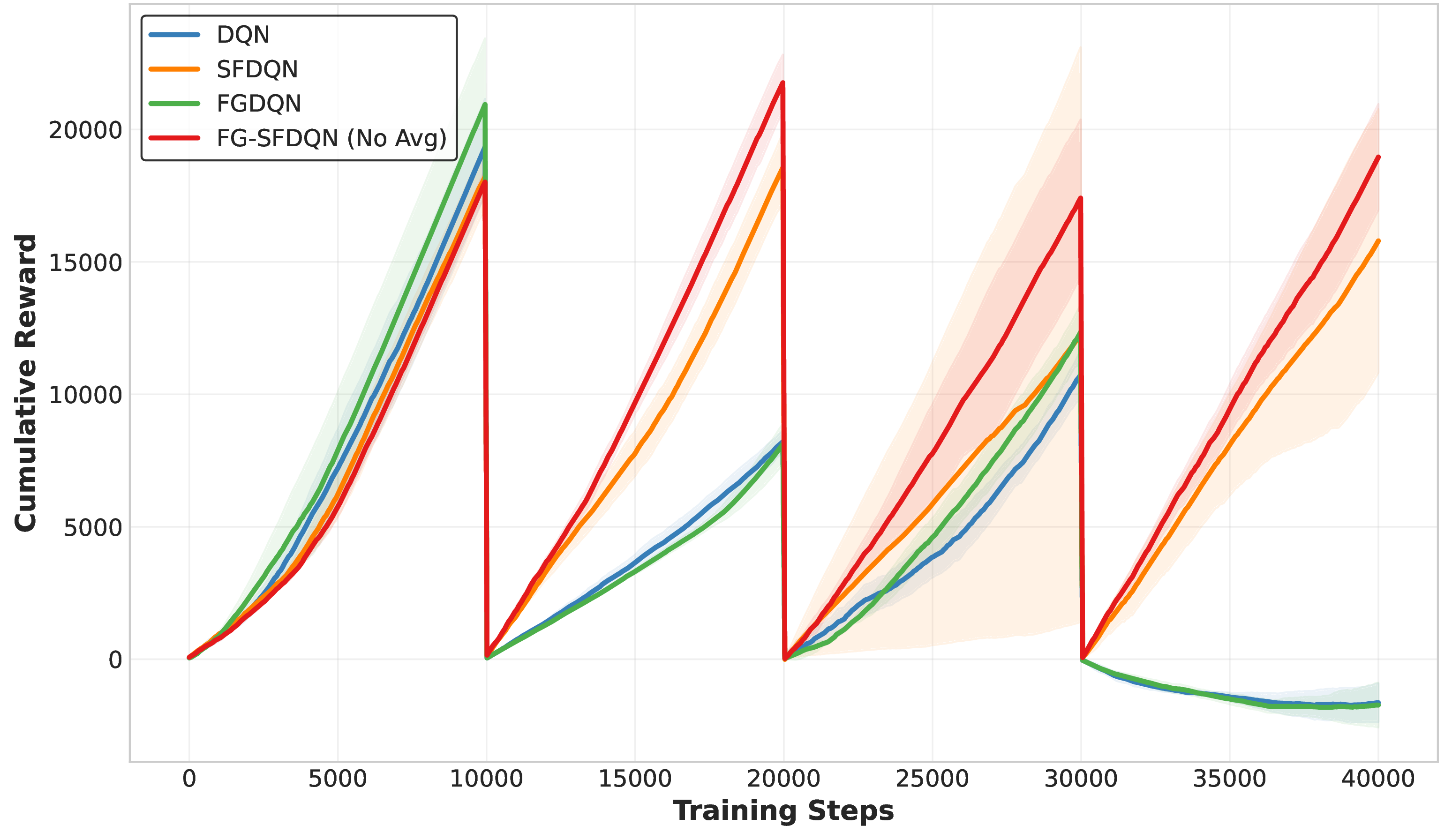}
(b) Reacher
\end{figure}

\begin{figure}[!t]\ContinuedFloat
\centering
\includegraphics[width=\linewidth]{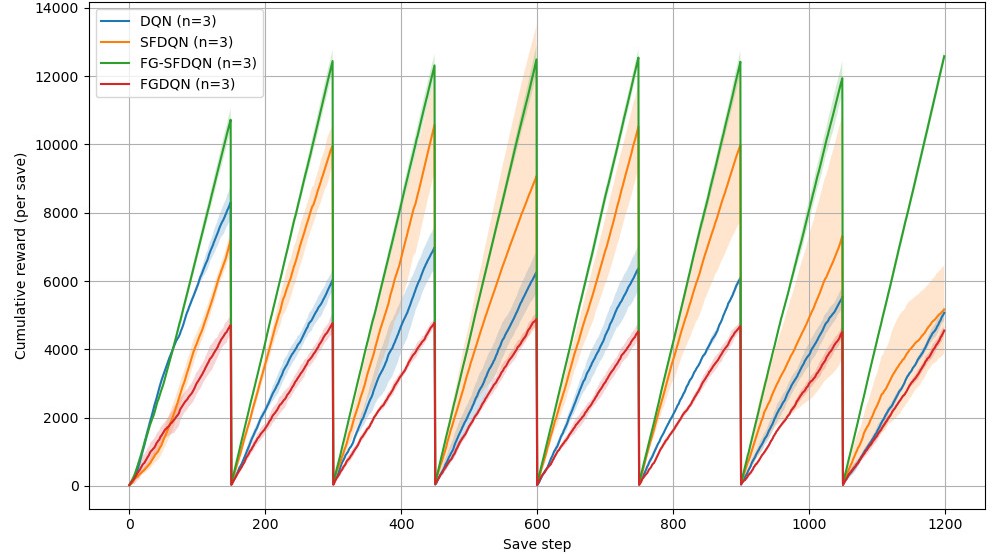}
(c) PointMaze-Large
\end{figure}

\begin{figure}[!t]\ContinuedFloat
\centering
\includegraphics[width=\linewidth]{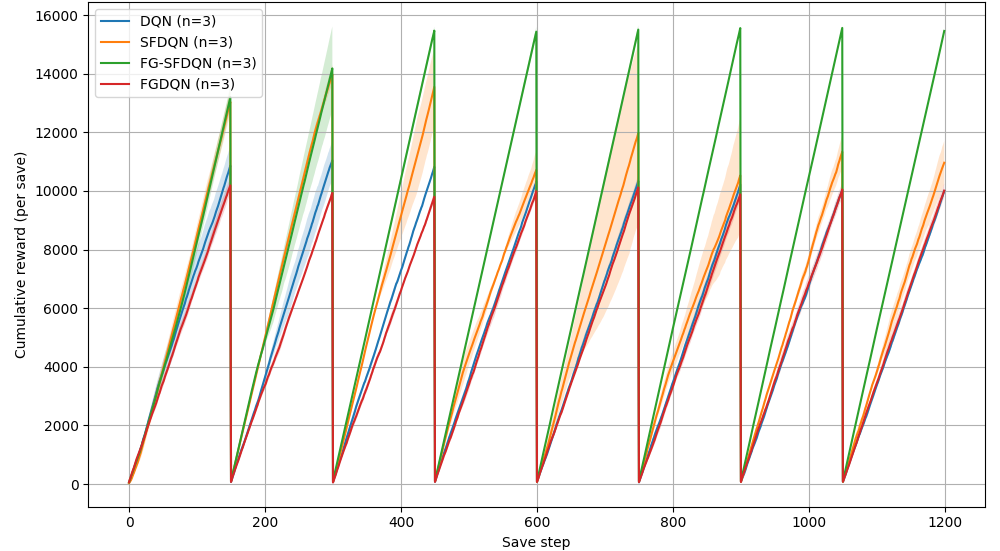}
\\[-1.2mm]
(d) PointMaze-Medium
\end{figure}

\begin{figure}[!t]\ContinuedFloat
\centering
\includegraphics[width=\linewidth]{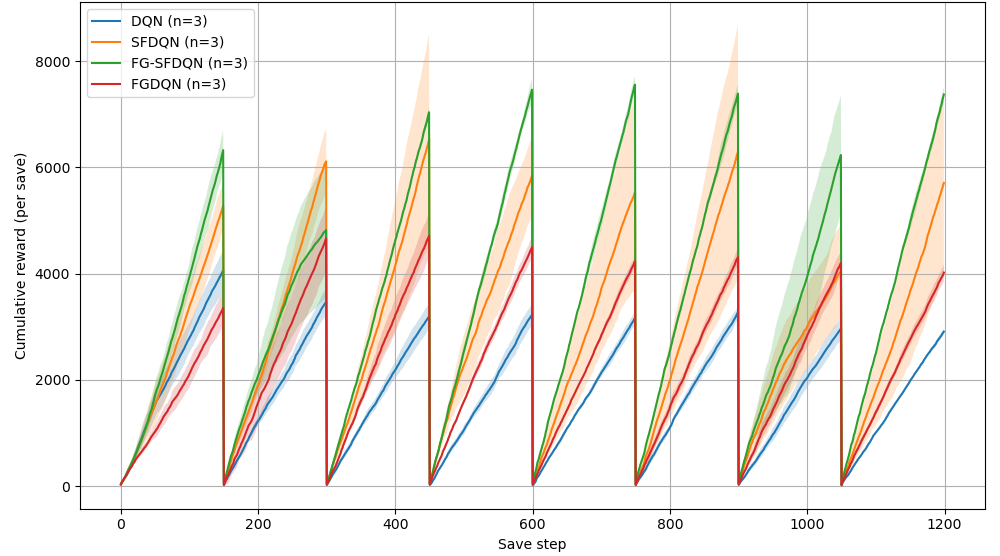}
\\[-1.2mm]
(e) PointMaze-Umaze
\end{figure}

\begin{figure}[!t]
\centering
\includegraphics[width=\linewidth]{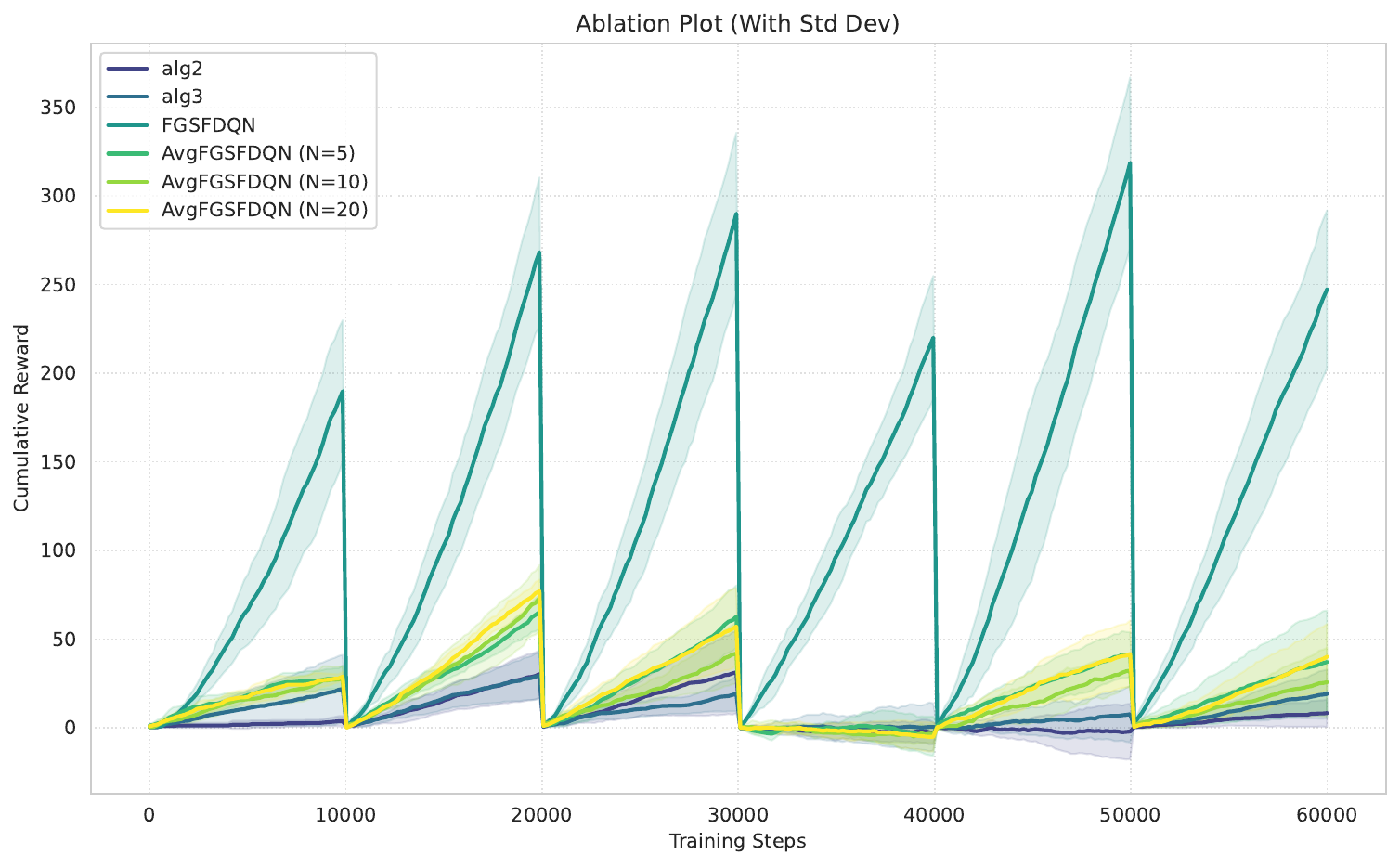}
\caption{Ablation of averaging for FG-SFDQN. Curves plot cumulative reward collected on the active task over training steps. FG-SFDQN (Alg.~\ref{alg:fg-sfrql}) achieves much higher cumulative returns compared to its averaging variants with $N=5,10,20$.}
\label{fig:ablation_avg}
\end{figure}

\subsubsection{\textbf{Final evaluation}}
\label{subsec:final_eval}
To rigorously quantify the final performance, we conducted a held-out evaluation after the conclusion of the training phase. The evaluation protocol tested each agent on every task for $n_{\text{episodes}}=10$ episodes, with each episode capped at 100 steps.

\begin{figure}[!t]
\centering

\includegraphics[width=\linewidth]{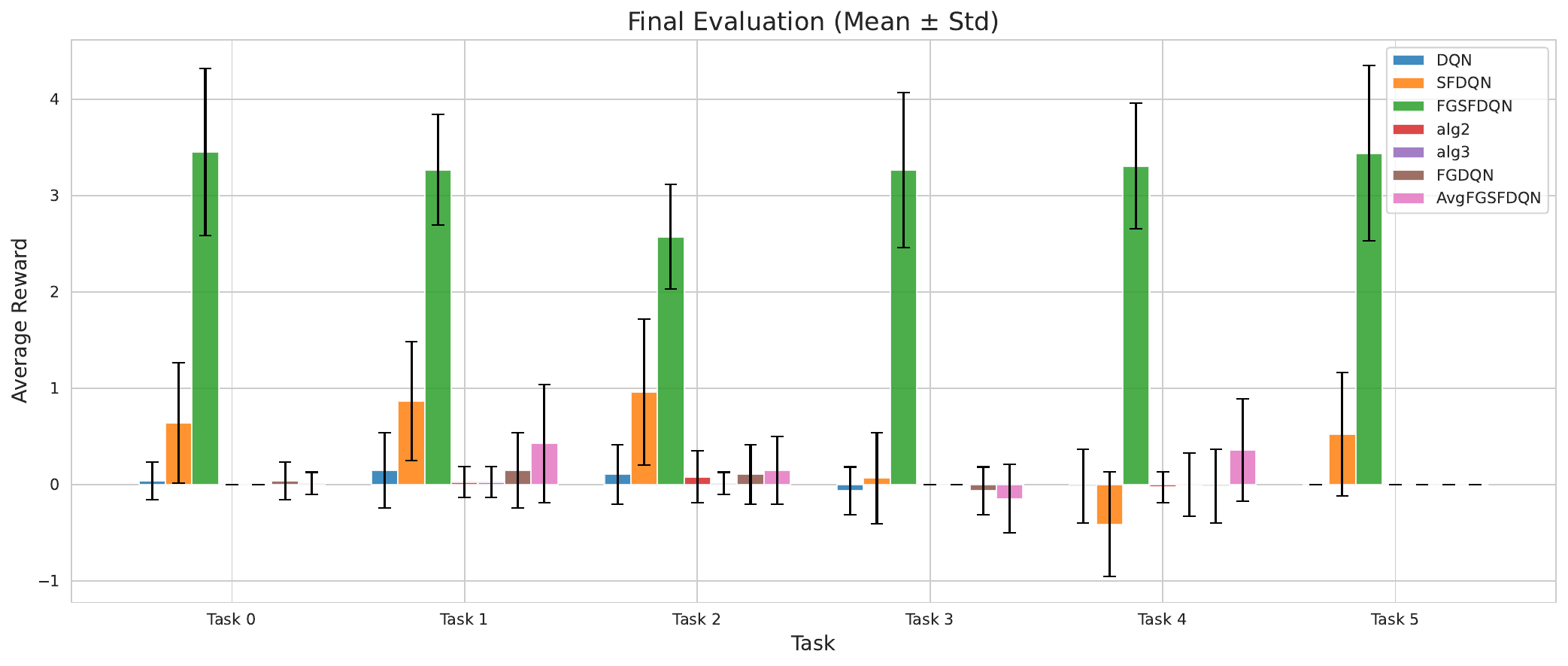}
\\[-2mm]
(a) Four Rooms

\caption{
Final evaluation performance across environments. FG-SFDQN (Alg. 1) consistently achieves higher returns compared to all baselines.\\
Color scheme: DQN (blue), SFDQN (orange), FG-SFDQN (green), FG-SFDQN Alg. 2 (red), FG-SFDQN Alg. 3 (purple), FGDQN (brown).
}
\label{fig:final_eval_all}

\end{figure}

\begin{figure}[!t]\ContinuedFloat
\centering
\includegraphics[width=\linewidth]{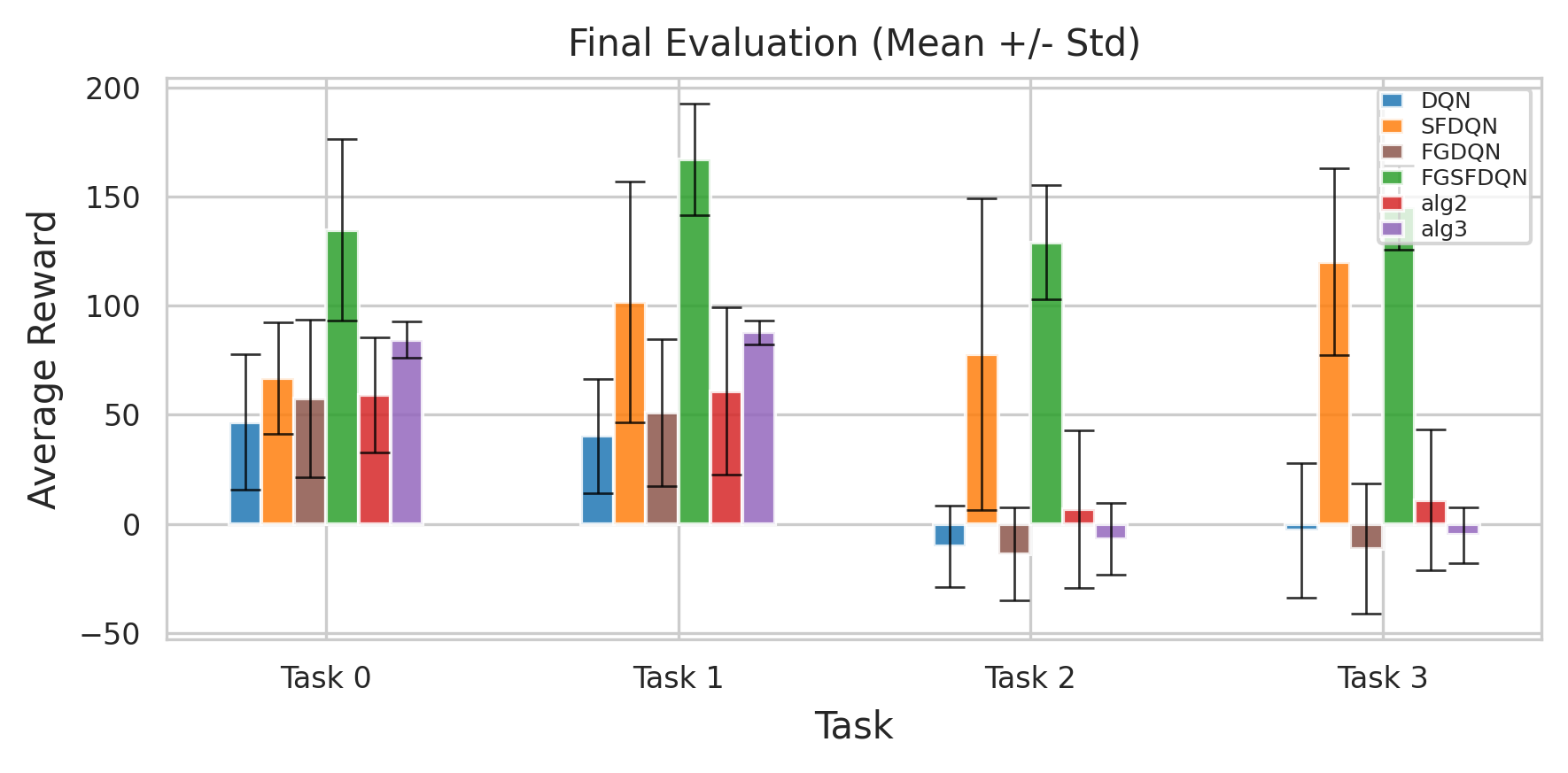}
\\[-2mm]
(b) Reacher
\end{figure}

\begin{figure}[!t]\ContinuedFloat
\centering
\includegraphics[width=\linewidth]{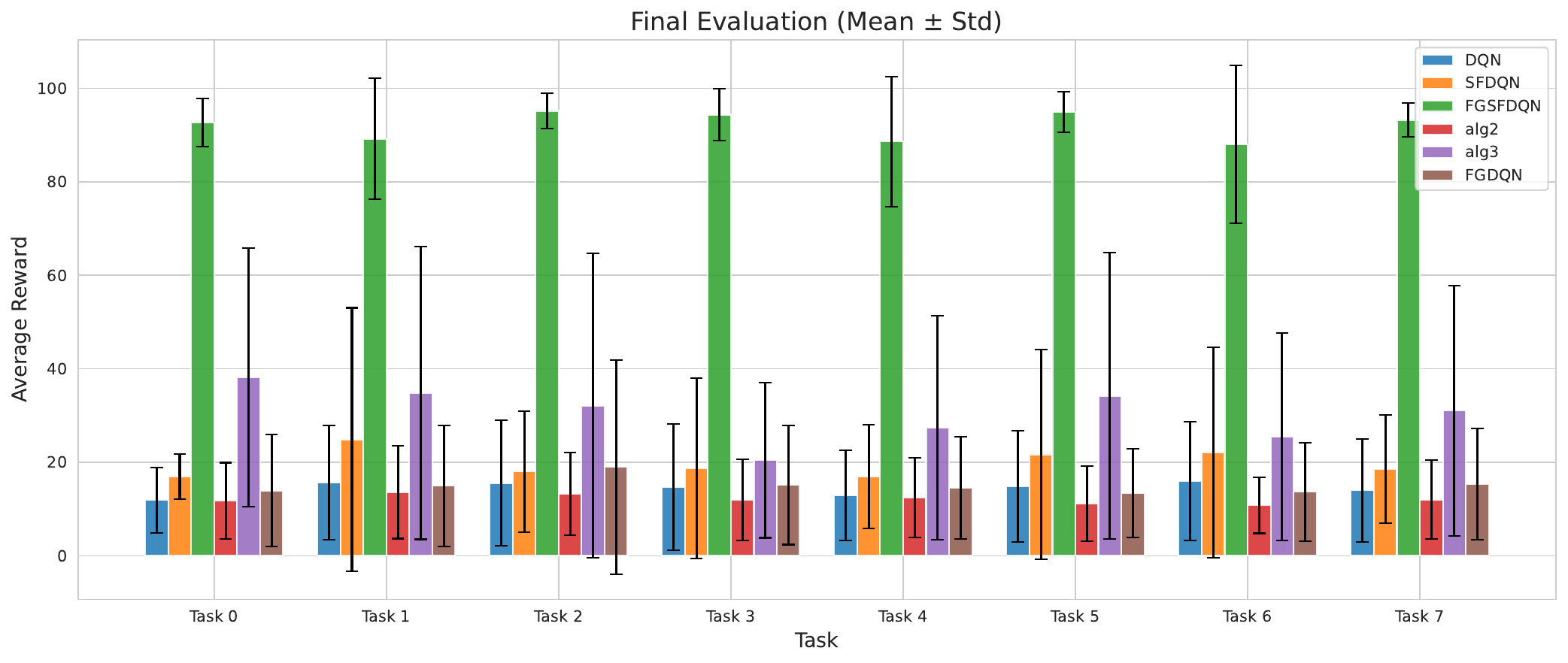}
\\[-2mm]
(c) PointMaze-Large
\end{figure}

\begin{figure}[!t]\ContinuedFloat
\centering
\includegraphics[width=\linewidth]{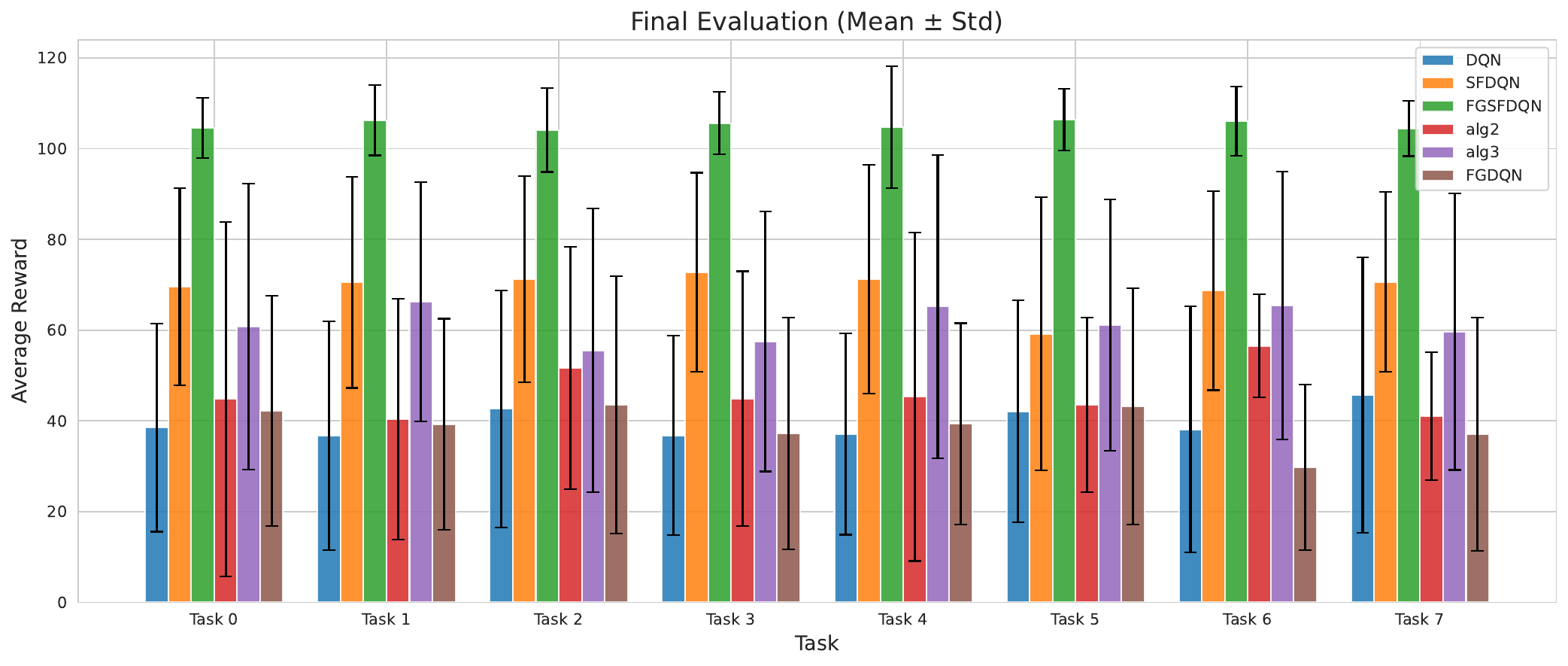}
\\[-2mm]
(d) PointMaze-Medium
\end{figure}

\begin{figure}[!t]\ContinuedFloat
\centering
\includegraphics[width=\linewidth]{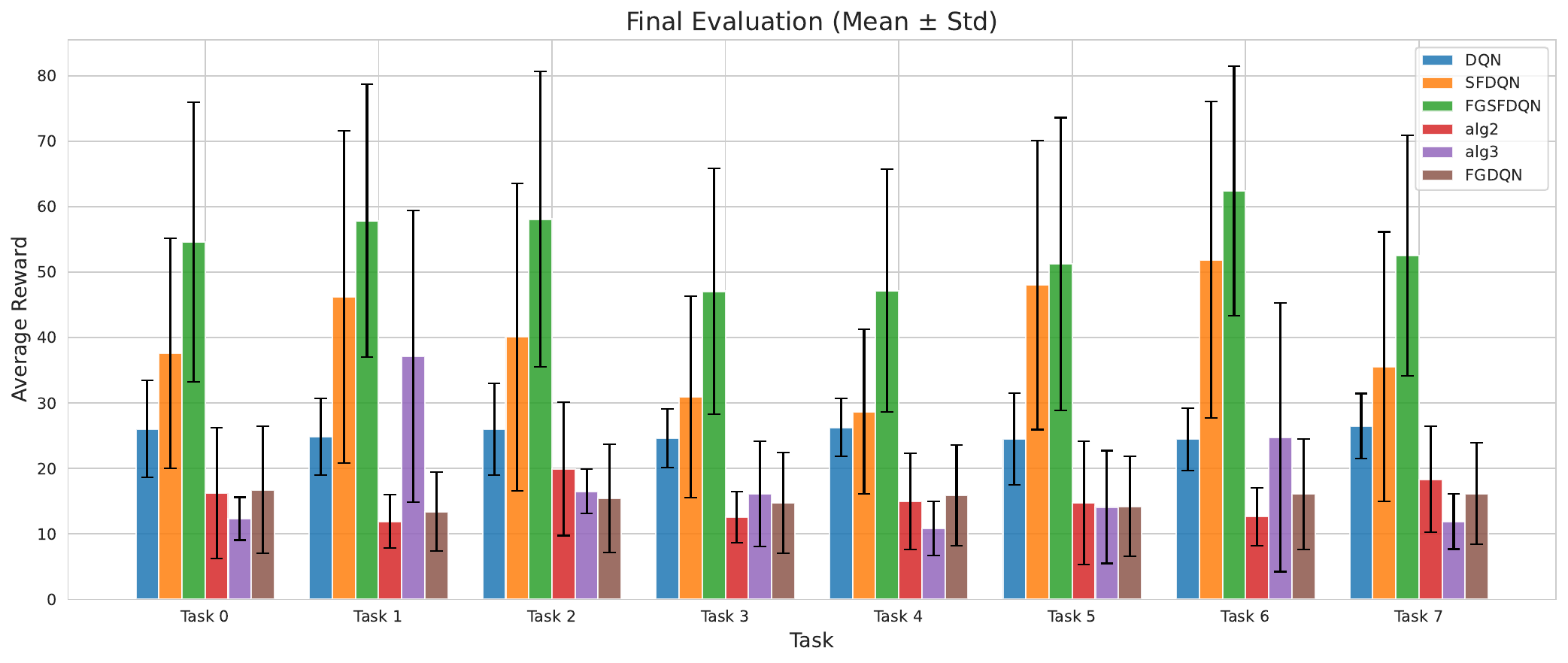}
\\[-2mm]
(e) PointMaze-Umaze
\end{figure}

FG-SFDQN (Alg.~\ref{alg:fg-sfrql}) achieves the strongest final performance across all evaluated domains (Figure \ref{fig:final_eval_all}). In the Four-Rooms environment, it attains mean returns of approximately 3.0–4.0, while all baselines remain near zero, with some even yielding negative rewards.

In the continuous Reacher domain, Alg. 1 again achieves the highest average returns across all tasks. The alternative full-gradient variants (Alg. 2 and Alg. 3), which use randomized task sampling, achieve performance comparable to SFDQN but do not consistently match Alg. 1.

A similar pattern is observed in the PointMaze environments, where FG-SFDQN consistently outperforms all baselines and demonstrates improved stability. While SFDQN improves over DQN and FG-DQN performs similarly to DQN, neither matches FG-SFDQN. Notably, in the more challenging PointMaze-Large setting, Alg. 3 achieves the second-best performance, outperforming SFDQN.

\subsubsection{\textbf{Computational overhead comparison}}
Table~\ref{tab:performance} summarizes the computational overhead of each method in terms of mean execution time and per-step variance on the PointMaze UMaze environment. As expected, DQN and FG-DQN achieve the lowest runtime. SFDQN introduces additional cost from successor feature estimation. FG-SFDQN introduces slightly higher overhead from full gradient updates, but this remains modest relative to its performance gains. All experiments were conducted on an AMD Ryzen 9 7950X CPU and an NVIDIA RTX A6000 GPU.

\begin{table}[h]
\centering
\caption{Computational overhead comparison}
\label{tab:performance}
\begin{tabular}{lcc}
\hline
\textbf{Algorithm} & \textbf{Mean (ms)} & \textbf{Variance (ms)} \\
\hline
DQN     & 1.364644  & 0.004691 \\
SFDQN   & 11.064082 & 0.038130 \\
FG-SFDQN (alg1)    & 17.913484 & 0.943214 \\
FG-SFDQN (alg2)    & 16.985953 & 4.479506 \\
FG-SFDQN (alg3)    & 18.615328  & 2.947520 \\
FG-DQN   & 1.398863  & 0.001512 \\
\hline
\end{tabular}
\end{table}

We present the following algorithms for FG-SFRQL in the \href{https://github.com/RitishShrirao/Full-Gradient-Successor-Feature-Representations/blob/ce830e3ef5c40019b31d55fde9224b20247a8caf/supplementary.pdf}{supplementary material}
\begin{itemize}
    \item \textbf{Algorithm~\ref{alg:fg-sfrql}} presents the sequential implementation used in the experiments, where tasks are learned one after another. This is a direct modification of the standard SFRQL algorithm that incorporates full gradient updates.
    \item \textbf{Algorithm~2} introduces randomized task sampling. By drawing tasks from a distribution $\pi(i)$, it ensures that state-action-task triplets are visited according to a stationary distribution, satisfying the stationarity requirement of the mean-field analysis.
    \item \textbf{Algorithm~3} extends Algorithm~2 by incorporating \textit{averaging} over $N$ next-state transitions.
\end{itemize}

\IfFileExists{IEEEtran.bst}{%
    \bibliographystyle{IEEEtran}%
}{%
    \bibliographystyle{plain}%
}
\bibliography{references}

\clearpage
\onecolumn

\section*{Appendix A: Algorithm}
\begin{algorithm}[H]
\caption{Joint Full Gradient Model-free SFRQL (FG-SFRQL)}
\label{alg:fg-sfrql}
\begin{algorithmic}[1]
\Require exploration rate $\epsilon$; learning rate for $\xi$-functions $\alpha$; learning rate for reward models $\mathcal{R}$: $\alpha_R$
\Require features $\phi$ or $\Phi$; optional: reward functions for tasks: $\{\mathcal{R}_1, \mathcal{R}_2, \dots, \mathcal{R}_{\text{num\_tasks}}\}$
\Statex
\For{$i = 1$ to num\_tasks}
    \If{$\mathcal{R}_i$ not provided}
        \State initialize $\tilde{\mathcal{R}}_i$: $\theta_i^R \leftarrow$ small random initial values
    \EndIf
    \If{$i = 1$}
        \State initialize $\xi_1$: $\theta_1^\xi \leftarrow$ small random initial values 
    \Else
        \State $\theta_i^\xi \leftarrow \theta_{i-1}^\xi$
    \EndIf
    \State new\_episode $\leftarrow$ \textbf{true}
    \For{$t=1$ to num\_steps}
        \If{new\_episode}
            \State new\_episode $\leftarrow$ \textbf{false}
            \State $s_t \leftarrow$ initial state
        \EndIf
        \State $c \leftarrow \arg\max_{k \in \{1, \dots, i\}} \max_a \sum_{\phi \in \Phi} \xi_k(s_t, a, \phi; \theta_k^\xi) \tilde{\mathcal{R}}_i(\phi)$ \Comment{GPI optimal policy}
        \State With probability $\epsilon$ select a random action $a_t$, otherwise $a_t \leftarrow \arg\max_a \sum_{\phi \in \Phi} \xi_c(s_t, a, \phi; \theta_c^\xi) \tilde{\mathcal{R}}_i(\phi)$
        
        \State Take action $a_t$, observe reward $r_t$ and next state $s_{t+1}$. Let $\phi_t = \phi(s_t, a_t, s_{t+1})$.
        
        \If{$\tilde{\mathcal{R}}_i$ not provided}
            \State Update $\theta_i^R$ using $\mathrm{SGD}(\alpha_R)$ with $\mathcal{L}_R = (r_t - \tilde{\mathcal{R}}_i(\phi_t))^2$
        \EndIf
        
        \If{$s_{t+1}$ is a terminal state}
            \State $\gamma_t \leftarrow 0$; new\_episode $\leftarrow$ \textbf{true}
        \Else
            \State $\gamma_t \leftarrow \gamma$
        \EndIf
        
        \Statex
        \State \Comment{GPI optimal next action for task $i$}
        \State $\hat{a}_{t+1} \leftarrow \arg\max_{a'} \arg\max_{k \in \{1, \dots, i\}} \sum_{\phi' \in \Phi} \xi_k(s_{t+1}, a', \phi'; \theta_k^\xi) \tilde{\mathcal{R}}_i(\phi')$
        
        \State \Comment{Full Gradient update for policy $i$}
        \State $\nabla_{\theta_i^\xi} \mathcal{L}_t^{(i)} \leftarrow \sum_{\phi \in \Phi} \left[ \left(\ind(\phi=\phi_t) + \gamma_t \xi_i(s_{t+1}, \hat{a}_{t+1}, \phi; \theta_i^\xi)\right) - \xi_i(s_t, a_t, \phi; \theta_i^\xi) \right] \times $
        \Statex \qquad \qquad \qquad \qquad $\left( \gamma_t \nabla_{\theta_i^\xi} \xi_i(s_{t+1}, \hat{a}_{t+1}, \phi; \theta_i^\xi) - \nabla_{\theta_i^\xi} \xi_i(s_t, a_t, \phi; \theta_i^\xi) \right)$
        \State $\theta_i^\xi \leftarrow \theta_i^\xi - \alpha_t \nabla_{\theta_i^\xi} \mathcal{L}_t^{(i)}$
        
        \If{$c \neq i$}
            \State \Comment{Optimal next action for executed policy $c$}
            \State $\hat{a}_{t+1} \leftarrow \arg\max_{a'} \sum_{\phi' \in \Phi} \xi_c(s_{t+1}, a', \phi'; \theta_c^\xi) \tilde{\mathcal{R}}_c(\phi')$
            
            \State \Comment{Full Gradient update for policy $c$}
            \State $\nabla_{\theta_c^\xi} \mathcal{L}_t^{(c)} \leftarrow \sum_{\phi \in \Phi} \left[ \left(\ind(\phi=\phi_t) + \gamma_t \xi_c(s_{t+1}, \hat{a}_{t+1}, \phi; \theta_c^\xi)\right) - \xi_c(s_t, a_t, \phi; \theta_c^\xi) \right] \times $
            \Statex \qquad \qquad \qquad \qquad $\left( \gamma_t \nabla_{\theta_c^\xi} \xi_c(s_{t+1}, \hat{a}_{t+1}, \phi; \theta_c^\xi) - \nabla_{\theta_c^\xi} \xi_c(s_t, a_t, \phi; \theta_c^\xi) \right)$
            \State $\theta_c^\xi \leftarrow \theta_c^\xi - \alpha_t \nabla_{\theta_c^\xi} \mathcal{L}_t^{(c)}$
        \EndIf
        
        \State For all other $j \notin \{i, c\}$, $\theta_j^\xi \leftarrow \theta_j^\xi$.
        \State $s_t \leftarrow s_{t+1}$
    \EndFor
\EndFor
\end{algorithmic}
\end{algorithm}

\clearpage
\begin{algorithm}[H]
\caption{Full-Gradient SFRQL with Randomized Task Sampling}
\label{alg:fg-sfrql-random}
\begin{algorithmic}[1]
\Require exploration rate $\epsilon$; learning rates $\alpha_\xi$, $\alpha_R$
\Require features $\Phi$; set of tasks $\{1,\dots,m\}$; optional task rewards $\{\mathcal{R}_i\}$
\State Initialize $\{\theta_i^\xi\}_{i=1}^m$ (successor feature parameters) and $\{\theta_i^R\}_{i=1}^m$ (reward models)
\State Initialize replay buffer $\mathcal{D} = \emptyset$
\Statex
\For{each training iteration $t=1,2,\dots$}
    \State Sample a task index $i \sim \pi(i)$ \Comment{Sample current task}
    \State Sample a state-action pair $(s,a)$ from buffer $\mathcal{D}$ or environment
    \State Take action $a$ in task $i$, observe reward $r$, next state $s'$, and feature $\phi_t = \phi(s,a,s')$
    \Statex

    \If{$\mathcal{R}_i$ not provided}
        \State Update reward model $\tilde{\mathcal{R}}_i$ using $\mathrm{SGD}(\alpha_R)$ on loss $(r - \tilde{\mathcal{R}}_i(\phi_t))^2$
    \EndIf

    \Statex
    \State \Comment{Generalized Policy Improvement (GPI)}
    \State $c \leftarrow \arg\max_{k \in \{1,\dots,m\}} \max_{a'} 
        \sum_{\phi' \in \Phi} \xi_k(s',a',\phi';\theta_k^\xi)\,\tilde{\mathcal{R}}_i(\phi')$
    \State With prob. $\epsilon$, choose a random $a_t$; else $a_t \leftarrow \arg\max_a \sum_{\phi\in\Phi} \xi_c(s,a,\phi;\theta_c^\xi)\tilde{\mathcal{R}}_i(\phi)$
    \Statex

    \If{$s'$ is terminal} 
        \State $\gamma_t \leftarrow 0$
    \Else
        \State $\gamma_t \leftarrow \gamma$
    \EndIf

    \Statex
    \State \Comment{Update for current task $i$}
    \State $\hat{a}_{t+1} \leftarrow \arg\max_{a'} \sum_{\phi'\in\Phi} \xi_c(s',a',\phi';\theta_c^\xi)\tilde{\mathcal{R}}_i(\phi')$
    \State $\nabla_{\theta_i^\xi}\mathcal{L}_t^{(i)} \leftarrow 
        \sum_{\phi\in\Phi} \Big[ \ind(\phi=\phi_t) + \gamma_t \xi_i(s',\hat{a}_{t+1},\phi;\theta_i^\xi) - \xi_i(s,a_t,\phi;\theta_i^\xi) \Big]
        \times \Big(\gamma_t\nabla_{\theta_i^\xi}\xi_i(s',\hat{a}_{t+1},\phi;\theta_i^\xi) - \nabla_{\theta_i^\xi}\xi_i(s,a_t,\phi;\theta_i^\xi)\Big)$
    \State $\theta_i^\xi \leftarrow \theta_i^\xi - \alpha_\xi \nabla_{\theta_i^\xi}\mathcal{L}_t^{(i)}$

    \Statex
    \If{$c \neq i$}
        \State \Comment{Update for GPI-selected policy $c$}
        \State $\hat{a}_{t+1} \leftarrow \arg\max_{a'} \sum_{\phi'\in\Phi} \xi_c(s',a',\phi';\theta_c^\xi)\tilde{\mathcal{R}}_c(\phi')$
        \State $\nabla_{\theta_c^\xi}\mathcal{L}_t^{(c)} \leftarrow 
            \sum_{\phi\in\Phi} \Big[ \ind(\phi=\phi_t) + \gamma_t \xi_c(s',\hat{a}_{t+1},\phi;\theta_c^\xi) - \xi_c(s,a_t,\phi;\theta_c^\xi) \Big]
            \times \Big(\gamma_t\nabla_{\theta_c^\xi}\xi_c(s',\hat{a}_{t+1},\phi;\theta_c^\xi) - \nabla_{\theta_c^\xi}\xi_c(s,a_t,\phi;\theta_c^\xi)\Big)$
        \State $\theta_c^\xi \leftarrow \theta_c^\xi - \alpha_\xi \nabla_{\theta_c^\xi}\mathcal{L}_t^{(c)}$
    \EndIf

    \State Store $(s,a,r,s',i)$ in $\mathcal{D}$
\EndFor
\end{algorithmic}
\end{algorithm}

\begin{algorithm}[H]
\caption{Full-Gradient SFRQL with Randomized Task Sampling and Averaging}
\label{alg:fg-sfrql-random}
\begin{algorithmic}[1]
\Require exploration rate $\epsilon$; learning rates $\alpha_\xi$, $\alpha_R$; batch size $N$
\Require features $\Phi$; set of tasks $\{1,\dots,m\}$
\State Initialize parameters $\{\theta_i^\xi\}_{i=1}^m$ and replay buffer $\mathcal{D} = \emptyset$
\Statex
\For{each training iteration $t=1,2,\dots$}
    \State \textbf{Environment Step:}
    \State Sample task $i \sim \pi(i)$. Take step using $\epsilon$-greedy GPI.
    \State Store transition $(s, a, r, s', \phi, i)$ in $\mathcal{D}$.
    
    \Statex
    \State \textbf{Averaged Full-Gradient Update:}
    \State Sample a pivot state-action pair $(s_k, a_k)$ from $\mathcal{D}$.
    \State Retrieve a batch of $N$ transitions $\mathcal{K} = \{(s'_p, \phi_p)\}_{p=1}^N$ from $\mathcal{D}$ that correspond to $(s_k, a_k)$.
    
    \If{$|\mathcal{K}| < N$} \textbf{continue} \EndIf
    
    \Statex
    \State \Comment{Compute Averaged Targets}
    \State Select GPI prior: $c \leftarrow \arg\max_{k} \max_{a'} \sum_{\phi'} \xi_k(\bar{s}',a',\phi';\theta_k^\xi)\tilde{\mathcal{R}}_i(\phi')$ using mean next-state $\bar{s}'$.
    \State Select next action: $\hat{a} \leftarrow \arg\max_{a'} \sum_{\phi'} \xi_c(\bar{s}',a',\phi';\theta_c^\xi)\tilde{\mathcal{R}}_i(\phi')$.
    
    \Statex
    \State \Comment{Calculate Averaged Vector Components (Eq. \ref{eq:avg_gradient})}
    \State $\bar{\Phi}_{target} \leftarrow \frac{1}{N} \sum_{p=1}^N \ind(\phi_p)$
    \State $\bar{\xi}_{next}^{(j)} \leftarrow \frac{1}{N} \sum_{p=1}^N \xi_j(s'_p, \hat{a}, \cdot; \theta^{(j)})$ for $j \in \{i, c\}$
    \State $\bar{\nabla}\xi_{next}^{(j)} \leftarrow \frac{1}{N} \sum_{p=1}^N \nabla \xi_j(s'_p, \hat{a}, \cdot; \theta^{(j)})$ for $j \in \{i, c\}$
    
    \Statex
    \State \Comment{Update Current Task $i$}
    \State $\delta_i \leftarrow (\bar{\Phi}_{target} + \gamma \bar{\xi}_{next}^{(i)}) - \xi_i(s_k, a_k, \cdot; \theta^{(i)})$
    \State $\Delta g_i \leftarrow \gamma \bar{\nabla}\xi_{next}^{(i)} - \nabla \xi_i(s_k, a_k, \cdot; \theta^{(i)})$
    \State $\theta_i^\xi \leftarrow \theta_i^\xi - \alpha_\xi \sum_{\phi} [2 \cdot \delta_i(\phi) \cdot \Delta g_i(\phi)]$

    \Statex
    \If{$c \neq i$}
        \State \Comment{Update Prior Task $c$}
        \State $\delta_c \leftarrow (\bar{\Phi}_{target} + \gamma \bar{\xi}_{next}^{(c)}) - \xi_c(s_k, a_k, \cdot; \theta^{(c)})$
        \State $\Delta g_c \leftarrow \gamma \bar{\nabla}\xi_{next}^{(c)} - \nabla \xi_c(s_k, a_k, \cdot; \theta^{(c)})$
        \State $\theta_c^\xi \leftarrow \theta_c^\xi - \alpha_\xi \sum_{\phi} [2 \cdot \delta_c(\phi) \cdot \Delta g_c(\phi)]$
    \EndIf
\EndFor
\end{algorithmic}
\end{algorithm}

\twocolumn
\clearpage

\end{document}